\documentclass{article}

\usepackage{microtype}
\usepackage{graphicx}
\usepackage{subcaption}
\usepackage{booktabs}
\usepackage{float}

\usepackage{hyperref}

\usepackage[preprint]{icml2026}

\setcitestyle{numbers,square,sort&compress,citesep={,}}
\hypersetup{
  pdftitle={Where Does Reasoning Break? Step-Level Hallucination Detection via Hidden-State Transport Geometry},
  pdfauthor={Tyler Alvarez and Ali Baheri},
  pdfsubject={arXiv preprint},
  pdfkeywords={Hallucination Detection, Reasoning, Large Language Models, Contrastive PCA, Optimal Transport, Knowledge Distillation}
}

\usepackage{amsmath}
\usepackage{amssymb}
\usepackage{amsfonts}
\usepackage{mathtools}
\usepackage{amsthm}
\usepackage{nicefrac}
\usepackage{xcolor}
\usepackage{algorithm}
\usepackage{algorithmic}

\usepackage[capitalize,noabbrev]{cleveref}

\allowdisplaybreaks

\theoremstyle{plain}
\newtheorem{theorem}{Theorem}[section]
\newtheorem{proposition}[theorem]{Proposition}
\newtheorem{lemma}[theorem]{Lemma}

\theoremstyle{definition}

\theoremstyle{remark}

\newcommand{\R}{\mathbb{R}}
\newcommand{\E}{\mathbb{E}}
\newcommand{\Prob}{\mathbb{P}}
\newcommand{\Tr}{\operatorname{Tr}}
\newcommand{\op}{\operatorname{op}}

\newcommand{\med}{\operatorname{median}}
\newcommand{\MAD}{\operatorname{MAD}}
\newcommand{\cW}{\mathcal{W}}

\newcommand{\cC}{\mathcal{C}}

\definecolor{teachercolor}{RGB}{0,90,170}   
\definecolor{studentcolor}{RGB}{200,90,30}  
\newcommand{\teach}[1]{\textcolor{teachercolor}{#1}}
\newcommand{\stud}[1]{\textcolor{studentcolor}{#1}}

\icmltitlerunning{Where Does Reasoning Break? Hidden-State Transport Geometry}

\begin{document}

\twocolumn[
\icmltitle{Where Does Reasoning Break?\\
           Step-Level Hallucination Detection via Hidden-State Transport Geometry}

\icmlsetsymbol{equal}{*}

\begin{icmlauthorlist}
\icmlauthor{Tyler Alvarez}{rit}
\icmlauthor{Ali Baheri}{rit}
\end{icmlauthorlist}

\icmlaffiliation{rit}{Rochester Institute of Technology, Rochester, NY, USA}

\icmlcorrespondingauthor{Tyler Alvarez}{tma9531@rit.edu}
\icmlcorrespondingauthor{Ali Baheri}{akbeme@rit.edu}

\icmlkeywords{Hallucination Detection, Reasoning, Large Language Models,
              Contrastive PCA, Optimal Transport, Knowledge Distillation}

\vskip 0.3in
]

\printAffiliationsAndNotice{}  

\begin{abstract}
Large language models hallucinate during multi-step reasoning, but most existing detectors operate at the trace level: they assign one confidence score to a full output, fail to localize the first error, and often require multiple sampled completions. We frame hallucination instead as a property of the hidden-state trajectory produced during a single forward pass. Correct reasoning moves through a stable manifold of locally coherent transitions; a first error appears as a localized excursion in transport cost away from this manifold. We operationalize this view with a label-conditioned \emph{teacher} that builds a trace-specific contrastive PCA lens and scores each step with seven geometric transition features, and a deployable BiLSTM \emph{student} distilled from the teacher that operates on raw hidden states without inference-time labels. We prove that contrastive PCA is the optimal projection for a transport-separation objective between first-error and correct states, and that single-pass first-error localization holds whenever the first error creates a positive transport margin over preceding correct transitions. On ProcessBench, PRM800K, HaluEval, and TruthfulQA, both models outperform entropy-based, probing-based, and attention-based baselines in-domain; the teacher transfers stably across language models and datasets, while the student collapses under shift, a gap our distillation theory predicts. These results recast step-level hallucination detection as a problem of trajectory dynamics and identify the central obstacle to deployment: preserving the contrastive transport margin under distribution shift.
\end{abstract}

\section{Introduction}

Large language models (LLMs) now solve mathematical problems, multi-hop questions, and code generation tasks by producing long chains of reasoning steps~\citep{wei2022chain}. The same models also hallucinate within these chains, generating fluent steps that are nonetheless incorrect~\citep{ji2023survey, huang2024survey}, and a single early error typically propagates to a confidently wrong final answer. Detecting where reasoning first goes wrong, in a single forward pass, is therefore a prerequisite for trustworthy deployment of reasoning models.

Most existing hallucination detectors operate at the trace level. Token-level entropy and P(True) probing~\citep{kadavath2022language} and Bayesian uncertainty methods~\citep{gal2016dropout} are not calibrated for multi-step reasoning structure. Semantic entropy~\citep{kuhn2023semantic} and self-consistency~\citep{wang2023selfconsistency} sample many completions per prompt and aggregate, requiring multiple forward passes and producing one score per output. SelfCheckGPT~\citep{manakul2023selfcheckgpt} and INSIDE~\citep{chen2024inside} likewise reduce a full reasoning trace to a single confidence value. None of these methods localize the first error within a reasoning chain. Process reward models~\citep{lightman2023lets, wang2024mathshepherd, cobbe2021training} do score individual steps, but require expensive human annotation at training time and a separately trained verifier at inference. Hidden-state probes~\citep{burns2023discovering, azaria2023internal} classify the truthfulness of static factual statements, not the dynamics of an unfolding reasoning trace.

We propose to treat the sequence of hidden representations produced during a single forward pass as a trajectory in representation space, and to detect a hallucination as a localized excursion in transport cost away from the manifold of locally coherent transitions. We instantiate this view with two components. A label-conditioned \emph{teacher} uses step-level correctness labels to construct a trace-specific contrastive PCA lens and assigns each step a transport-instability score. The teacher is not deployable, since it requires labels at inference; rather, it is a diagnostic upper bound that quantifies the hidden-space signal. A \emph{student}, distilled from the teacher, learns to reproduce this score directly from raw hidden states, with no sampling, labels, or external verifier required at inference time.

\paragraph{Contributions.}
\begin{enumerate}
\item \textbf{A geometric framing of step-level hallucination detection.} We formulate hallucination detection as a problem of trajectory dynamics in hidden-state space, characterizing a first reasoning error as a localized transport excursion away from the manifold of locally coherent transitions. We instantiate this view with a label-conditioned \teach{teacher} that builds a trace-specific contrastive PCA lens and a deployable BiLSTM \stud{student} distilled from it that requires no labels, sampling, or external verifier at inference.

\item \textbf{Theoretical guarantees.} We prove that contrastive PCA is the optimal projection under a transport-separation objective between first-error and correct states (Theorem~\ref{thm:cpca-method}), establish a single-pass first-error localization bound under a transport margin assumption (Theorem~\ref{thm:localization-method}), and reduce \teach{teacher}--\stud{student} decision agreement to a margin-preservation condition (Proposition~\ref{prop:distillation-method}).

\item \textbf{Empirical findings.} On ProcessBench, PRM800K, HaluEval, and TruthfulQA, both models beat entropy-based, probing-based, and attention-based baselines in-domain. The \teach{teacher} transfers stably across models and datasets while the \stud{student} does not, a gap predicted by our distillation theory and identifying margin preservation under shift as the central deployment obstacle.
\end{enumerate}

\section{Related Work}

\textbf{Hallucination in LLMs.}
Hallucination, the generation of content that conflicts with source material or factual knowledge, is a well-surveyed phenomenon in language generation~\citep{ji2023survey} and large language models~\citep{huang2024survey}. For multi-step reasoning models, factual and reasoning-driven failures compound across steps, so the practically relevant target is to identify the \emph{step} at which the trace first deviates from a coherent reasoning path.

\textbf{Uncertainty and Hallucination Detection.}
Token-level entropy and P(True) probing~\citep{kadavath2022language} and Bayesian uncertainty methods~\citep{gal2016dropout} are not calibrated for multi-step reasoning. Semantic entropy~\citep{kuhn2023semantic} and self-consistency~\citep{wang2023selfconsistency} sample many completions and aggregate, requiring multiple forward passes and producing only trace-level scores. SelfCheckGPT~\citep{manakul2023selfcheckgpt} and INSIDE~\citep{chen2024inside} similarly reduce a full trace to one confidence value and do not localize where it first goes wrong.

\textbf{Process Supervision.}
Process reward models~\citep{lightman2023lets, wang2024mathshepherd, cobbe2021training} train supervised classifiers on human step-level correctness labels, and ProcessBench~\citep{zheng2024processbench} measures this ability directly. They rely on costly annotation, on a separately trained verifier at inference, and on dataset-specific labeling conventions, which limit transfer across models and tasks. Our deployable detector requires neither sampling nor a separate verifier.

\textbf{Probing, Interpretability, and Representation Engineering.}
Truth probes~\citep{burns2023discovering, azaria2023internal} train linear classifiers to predict whether a statement is true, activation patching~\citep{meng2022locating} localizes circuits responsible for factual recall, and representation engineering~\citep{zou2023representation} extracts conceptual directions for reading and steering high-level behaviors such as honesty. These methods analyze static representations of static claims; we instead model the \emph{trajectory} of hidden states across reasoning steps.

\textbf{Methodological Foundations.}
Contrastive PCA~\citep{abid2018contrastive} identifies low-dimensional directions enriched in a target distribution relative to a background; we use it in a trace-specific frame to expose first-error displacements. Optimal transport, in particular the squared 2-Wasserstein distance~\citep{villani2009optimal, peyre2019computational}, gives a clean transition score against the cloud of correct transitions, and the Davis-Kahan $\sin\Theta$ theorem~\citep{davis1970rotation} supplies the perturbation bounds for finite-sample stability. Recent adjacent work also supports geometry- and structure-aware views of reliability and representation learning~\citep{geometry_conformal_2026,conformal_scales_2025,multifidelity_temporal_2025,logic_vector_fields_2026,llm_contextual_bandit_2023}. Our deployable detector follows the standard knowledge distillation framework~\citep{hinton2015distilling}, although our analysis identifies \emph{margin} rather than mean error as the right distillation target for cross-domain transfer.

\textbf{Positioning.}
We depart from prior work along two axes. First, we score \emph{transitions} between reasoning steps rather than static states, using velocity, acceleration, and directional persistence in a contrastive lens, so that the detection signal is geometric rather than semantic. Second, we separate the question of whether a hidden-space signal exists (the label-conditioned teacher) from whether it can be recovered without labels at inference (the distilled student). This separation lets us prove guarantees for the geometric signal itself (Theorems~\ref{thm:cpca-method}--\ref{thm:localization-method}) and identify margin preservation under shift (Proposition~\ref{prop:distillation-method}) as the precise bottleneck for deployment.

\section{Methodology}
\label{sec:methodology}

GeoReason detects reasoning failures by treating a generated solution as a trajectory in the hidden space of a language model.  The central premise is geometric: correct reasoning may move through many semantic regions, but its step-to-step motion remains close to a stable manifold of locally coherent transitions; a first hallucination or reasoning error appears as a localized transport excursion away from this manifold.  Our method has two components (Figure~\ref{fig:architecture}).  First, a label-conditioned \emph{\teach{teacher}} constructs a contrastive geometric lens and converts hidden-state trajectories into transition-instability scores.  This \teach{teacher} is not deployable because it uses step labels to estimate its reference geometry.  Second, a \emph{\stud{student}} learns to reproduce the \teach{teacher}'s instability signal from raw hidden states alone, yielding a single-pass post-hoc detector for generated traces.

\begin{figure*}[!t]
  \centering
  \includegraphics[width=0.75\textwidth]{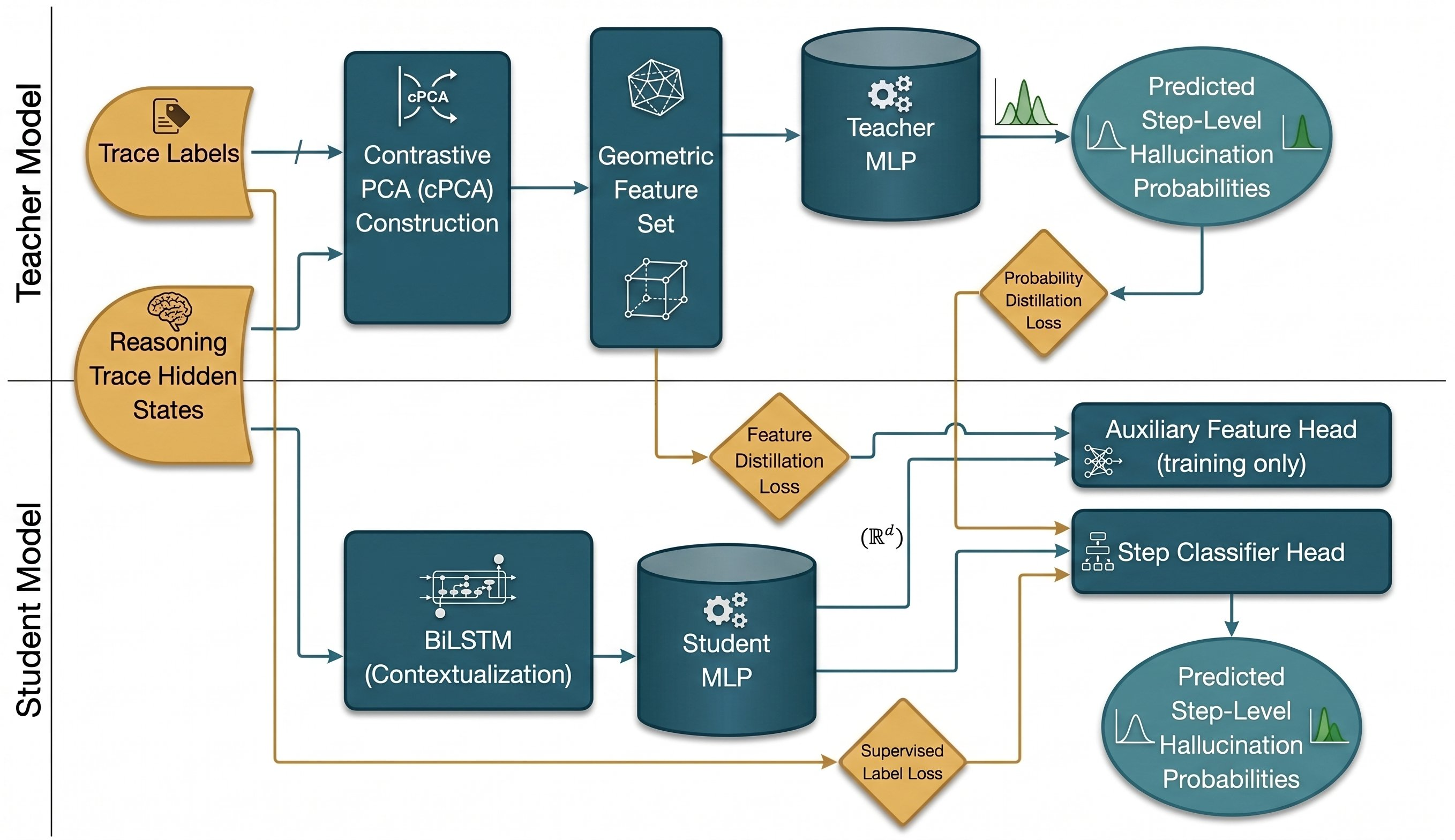}
  \caption{The GeoReason \teach{teacher}--\stud{student} architecture. The \teach{teacher} (top) uses step-level labels and reasoning-trace hidden states to construct a contrastive PCA (cPCA) projection, extracts a geometric feature set in this lens, and maps the features through an MLP to step-level hallucination probabilities. The \stud{student} (bottom) is a BiLSTM that contextualizes raw hidden states and feeds a step classifier head, trained from three signals: supervised step labels, probability distillation from the \teach{teacher}, and feature distillation through a training-only auxiliary head. At inference, the \stud{student} requires only hidden states.}
  \label{fig:architecture}
\end{figure*}

\subsection{Problem setup and step representations}
\label{subsec:problem}

For each prompt, let the model generate a trace $T=(s_1,\ldots,s_m)$, where each $s_t$ is a reasoning step, sentence, or phrase-level unit.  At a fixed transformer layer $\ell$, we map each step to one vector $h_t\in\R^d$ by mean pooling the token hidden states belonging to that step,
\begin{equation}
  h_t = \frac{1}{|I_t|}\sum_{j\in I_t} a_j^{(\ell)},
  \label{eq:step-pooling}
\end{equation}
where $I_t$ is the token index set for step $s_t$ and $a_j^{(\ell)}$ is the hidden state of token $j$ at layer $\ell$.  Other deterministic pooling rules, such as last-token pooling, can be used without changing the method.  During training and evaluation, a labeled trace has step labels $y_t\in\{0,1\}$, where $y_t=0$ denotes a correct step and $y_t=1$ denotes an incorrect step.  We define the first-error index as
\begin{equation}
\begin{aligned}
  &\tau = \min\{t: y_t=1\}, \\
  &\tau=\infty \text{ if the trace has no labeled error.}
\end{aligned}
\label{eq:first-error-index}
\end{equation}
For first-error localization, all steps after $\tau$ are treated as incorrect, since they are conditioned on a corrupted reasoning state even if their surface form later becomes plausible.  The objective is to learn scores $p_t\in[0,1]$ from a single generated trace such that $p_t$ is high exactly at and after the first error; the predicted first error is $\widehat\tau=\min\{t:p_t\geq\theta\}$, with no-error declared if no step crosses the threshold.

\subsection{Label-conditioned contrastive geometry}
\label{subsec:contrastive-geometry}

Raw hidden states contain many nuisance directions: prompt topic, syntax, answer length, and model-specific representation choices.  The \teach{teacher} therefore first converts each trace into a local coordinate system centered at its correct prefix.  Let $\cC=\{t:y_t=0\}$ be the correct steps in a labeled trace.  We compute
\begin{equation}
\begin{aligned}
  \bar h_0 &= \frac{1}{|\cC|}\sum_{t\in\cC} h_t, \\
  \sigma_0^2 &= \frac{1}{d|\cC|}\sum_{t\in\cC}\|h_t-\bar h_0\|_2^2, \\
  \widetilde h_t &= \frac{h_t-\bar h_0}{\sigma_0+\varepsilon}.
\end{aligned}
\label{eq:trace-normalization}
\end{equation}
This normalization makes the remaining signal relative to what the same model and prompt considered locally coherent before the error.

We then learn a contrastive PCA (cPCA) lens.  Let $P_0$ be the distribution of trace-normalized correct states and $P_1$ the distribution of first-error states, optionally including post-error states with smaller weight $\rho\in[0,1]$.  Denote their empirical means and covariances by $(\widehat\mu_0,\widehat C_0)$ and $(\widehat\mu_1,\widehat C_1)$.  GeoReason forms the contrastive transport matrix
\begin{equation}
  \widehat M_\alpha
  = (\widehat\mu_1-\widehat\mu_0)(\widehat\mu_1-\widehat\mu_0)^\top
    + \widehat C_1 - \alpha \widehat C_0,
  \qquad \alpha\geq 0,
  \label{eq:contrastive-matrix-method}
\end{equation}
where $\alpha$ controls how aggressively high-variance correct directions are suppressed.  The projection $U\in\R^{d\times k}$ is the matrix of the top $k$ eigenvectors of $\widehat M_\alpha$, and each step is represented in the geometric lens as
\begin{equation}
  z_t = U^\top \widetilde h_t\in\R^k.
  \label{eq:projected-state}
\end{equation}
When $\rho=0$, the lens isolates the first-error displacement; when $\rho=1$, it uses all incorrect steps and recovers the simpler implementation in which all post-error steps are assigned the incorrect class.  In practice, intermediate $\rho$ values prevent re-stabilized post-error states from diluting the first-error direction.

\subsection{Transition features and the \teach{teacher} detector}
\label{subsec:teacher}

The cPCA projection exposes where the trajectory leaves the correct manifold; transition features describe how it leaves.  Let
\begin{equation}
  \Delta z_t = z_t-z_{t-1},
  \qquad
  \Delta^2 z_t = z_t-2z_{t-1}+z_{t-2},
  \label{eq:velocity-acceleration}
\end{equation}
with zero padding for missing previous steps.  GeoReason uses the following feature block:
\begin{equation}
\begin{aligned}
  x_t &= \big[z_t,\; r_t,\; \bar r_t,\; v_t,\; a_t,\; e_t,\; d_t\big], \\
  r_t &= \|z_t\|_2, \quad
  \bar r_t = \frac{r_t-\med_{u\in\cC} r_u}{\MAD_{u\in\cC}(r_u)+\varepsilon},\\
  v_t &= \|\Delta z_t\|_2, \quad a_t = \|\Delta^2 z_t\|_2,\\
  e_t &= \frac{1}{w}\sum_{j=t-w+1}^{t}\!\big(v_j^2+a_j^2\big), \\
  d_t &= \frac{\langle \Delta z_t,\Delta z_{t-1}\rangle}
              {(\|\Delta z_t\|_2+\varepsilon)(\|\Delta z_{t-1}\|_2+\varepsilon)}.
\end{aligned}
\label{eq:feature-block}
\end{equation}
Here $r_t$ and $\bar r_t$ measure position in the contrastive space, $v_t$ and $a_t$ measure local motion, $e_t$ smooths transient noise, and $d_t$ distinguishes coherent continuation from an abrupt change of direction.  A lightweight MLP \teach{teacher} $\teach{f_\theta}$ maps $x_t$ to a probability
\begin{equation}
\begin{aligned}
  \teach{p_t^T} &= \sigma(\teach{f_\theta(x_t)}), \\
  \teach{\mathcal{L}_T(\theta)} &= -\sum_{i,t}
  \big[y_{it}\log \teach{p_{it}^T}\\
  &\qquad +(1-y_{it})\log(1-\teach{p_{it}^T})\big].
\end{aligned}
\label{eq:teacher-loss}
\end{equation}
The \teach{teacher} is best understood as a diagnostic upper bound on the hidden-space signal: it uses labels to construct the correct-step reference frame and the contrastive geometry, and is therefore not a valid standalone hallucination detector at deployment time.

\begin{algorithm}[H]
\caption{GeoReason \teach{teacher}: label-conditioned geometric instability}
\label{alg:teacher}
\begin{algorithmic}
\REQUIRE Labeled traces $\{(h_{i1:m_i},y_{i1:m_i})\}_{i=1}^n$, cPCA dimension $k$, background weight $\alpha$, window $w$.
\FOR{each trace $i$}
  \STATE $\cC_i\gets\{t:y_{it}=0\}$ and compute $\bar h_{i0},\sigma_{i0}$ from Eq.~\eqref{eq:trace-normalization}.
  \STATE Normalize each step: $\widetilde h_{it}\gets(h_{it}-\bar h_{i0})/(\sigma_{i0}+\varepsilon)$.
\ENDFOR
\STATE Estimate $(\widehat\mu_0,\widehat C_0)$ from all normalized correct steps and $(\widehat\mu_1,\widehat C_1)$ from first-error steps, optionally adding post-error steps with weight $\rho$.
\STATE Form $\widehat M_\alpha$ using Eq.~\eqref{eq:contrastive-matrix-method}; set $U\gets\operatorname{TopEig}_k(\widehat M_\alpha)$.
\FOR{each trace $i$ and step $t$}
  \STATE Project $z_{it}\gets U^\top\widetilde h_{it}$ and compute $x_{it}$ using Eq.~\eqref{eq:feature-block}.
\ENDFOR
\STATE Train the MLP \teach{teacher} $\teach{p^T_{it}}=\sigma(\teach{f_\theta(x_{it})})$ with Eq.~\eqref{eq:teacher-loss}.
\STATE \STATE \textbf{return} \teach{teacher} probabilities $\teach{p^T_{it}}$ and geometric features $x_{it}$.
\end{algorithmic}
\end{algorithm}

\subsection{Deployable \stud{student} by margin-preserving distillation}
\label{subsec:student}

The deployable model cannot use step labels to build $\cC$, $P_0$, or $P_1$ at inference.  We therefore distill the \teach{teacher} into a sequence model that takes only raw hidden states (Figure~\ref{fig:architecture}, bottom).  The \stud{student} is a BiLSTM followed by an MLP:
\begin{equation}
  c_{1:m}=\operatorname{BiLSTM}_\psi(h_{1:m}),
  \qquad
  \stud{p_t^S}=\sigma(\stud{g_\psi(c_t)}).
  \label{eq:student-architecture}
\end{equation}
The BiLSTM makes the detector post-hoc rather than online: it uses the whole generated trace, but it requires only one forward pass through the language model and no sampling, self-consistency, or external verifier.  We train the \stud{student} with a mixture of supervised step labels and soft \teach{teacher} targets,
\begin{equation}
\begin{aligned}
  \stud{\mathcal{L}_S(\psi)}
  &= \lambda\sum_{i,t}\operatorname{BCE}(y_{it},\stud{p_{it}^S}) \\
  &\quad + (1-\lambda)\tau_d^2\sum_{i,t}
   \operatorname{KL}\!\big(
      \operatorname{Bern}(\teach{q_{it}^T})\,\|\\
  &\qquad\quad \operatorname{Bern}(\stud{q_{it}^S})
    \big),\\
  \teach{q_{it}^T} &= \sigma(\operatorname{logit}(\teach{p_{it}^T})/\tau_d),\\
  \stud{q_{it}^S} &= \sigma(\operatorname{logit}(\stud{p_{it}^S})/\tau_d),
\end{aligned}
\label{eq:student-loss}
\end{equation}
where $\tau_d$ is the distillation temperature and $\lambda\in[0,1]$.  At test time, the \stud{student} returns the first threshold crossing
\begin{equation}
\begin{aligned}
  &\stud{\widehat\tau_S} = \min\{t:\stud{p_t^S}\geq\theta\}, \\
  &\stud{\widehat\tau_S}=\infty \text{ if no crossing occurs.}
\end{aligned}
\label{eq:student-first-error}
\end{equation}
Unless otherwise tuned on a validation split, we use $\theta=0.5$.

\begin{algorithm}[H]
\caption{GeoReason \stud{student}: deployable first-error detector}
\label{alg:student}
\begin{algorithmic}
\REQUIRE Training traces $\{h_{i1:m_i}\}_{i=1}^n$, optional labels $y_{it}$, \teach{teacher} probabilities $\teach{p^T_{it}}$, threshold $\theta$.
\STATE Train $\operatorname{BiLSTM}+\operatorname{MLP}$ \stud{student} with Eq.~\eqref{eq:student-loss}.
\STATE
\STATE \textbf{procedure} Infer($h_{1:m}$)
  \STATE $c_{1:m}\gets\operatorname{BiLSTM}(h_{1:m})$.
  \STATE $\stud{p_t^S}\gets\sigma(\stud{g(c_t)})$ for $t=1,\ldots,m$.
  \IF{$\max_t \stud{p_t^S} < \theta$} \STATE \STATE \textbf{return} no hallucination.
  \ELSE \STATE \STATE \textbf{return} hallucination with first-error estimate $\min\{t:\stud{p_t^S}\geq\theta\}$.
  \ENDIF
\STATE \textbf{end procedure}
\end{algorithmic}
\end{algorithm}

\subsection{Main theoretical results}
\label{subsec:theory}

We now justify the geometry used above.  The statements are intentionally assumption-explicit: GeoReason is guaranteed when the first semantic error induces a detectable transport-margin event in hidden-state trajectory space.  If a model makes an error while remaining hidden-state indistinguishable from correct trajectories, no geometry-only detector can be guaranteed to localize it.

\paragraph{Transport score for a reasoning transition.}
For an orthonormal projection $U$, define the augmented transition vector
\begin{equation}
  \phi_t(U)=\big[z_t,\; \Delta z_t,\; \Delta^2 z_t\big]\in\R^{3k}.
  \label{eq:augmented-transition-method}
\end{equation}
Let $R_0^U$ be the distribution of $\phi_t(U)$ over correct transitions.  For any positive semidefinite ground-cost matrix $A\succeq0$, define the point-to-cloud transport instability score
\begin{equation}
\begin{aligned}
  S_U(t)&=\cW_{2,A}^2(\delta_{\phi_t(U)},R_0^U) \\
  &:= \inf_{\pi\in\Pi(\delta_{\phi_t(U)},R_0^U)}
  \E_{(x,y)\sim\pi}(x-y)^\top A(x-y).
\end{aligned}
\label{eq:ot-score-method}
\end{equation}
Because one marginal is a point mass, the coupling is unique and
\begin{equation}
  S_U(t)=\E_{Y\sim R_0^U}(\phi_t(U)-Y)^\top A(\phi_t(U)-Y).
  \label{eq:ot-score-closed-form}
\end{equation}
Thus $S_U(t)$ is the cost of transporting the observed transition to the empirical cloud of correct transitions.  The features in Eq.~\eqref{eq:feature-block} are low-order summaries of this quadratic transport cost: position, velocity, acceleration, local energy, and directional persistence.

\begin{theorem}[cPCA maximizes a transport-separation objective]
\label{thm:cpca-method}
Let $X_0\sim P_0$ be a trace-normalized correct hidden vector and $X_1\sim P_1$ a trace-normalized first-error hidden vector, with means $\mu_0,\mu_1$ and covariances $C_0,C_1$.  For $U\in\R^{d\times k}$ with $U^\top U=I_k$, define
\begin{equation}
\begin{aligned}
  \Gamma(U)=\,&\E_{X_1}\cW_2^2(\delta_{U^\top X_1},U_\#P_0) \\
  &-\E_{X_0}\cW_2^2(\delta_{U^\top X_0},U_\#P_0),
\end{aligned}
\label{eq:transport-gap-method}
\end{equation}
where $U_\#P_0$ is the pushforward of $P_0$ under $U^\top$.  Then
\begin{equation}
\begin{aligned}
  \Gamma(U)&=\Tr\!\left(U^\top M U\right), \\
  M&=(\mu_1-\mu_0)(\mu_1-\mu_0)^\top+C_1-C_0.
\end{aligned}
\label{eq:theory-matrix-method}
\end{equation}
Consequently, the maximizer of $\Gamma(U)$ over all $k$-dimensional orthonormal projections is the top-$k$ eigenspace of $M$, and the optimal value is the sum of the top $k$ eigenvalues of $M$.
\end{theorem}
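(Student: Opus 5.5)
The plan is to reduce both transport terms in \eqref{eq:transport-gap-method} to explicit second-moment expressions, and then invoke Ky Fan's maximum principle. The key fact is the one already recorded in \eqref{eq:ot-score-closed-form}. When one marginal is a Dirac mass $\delta_x$, the only coupling is the product coupling. Hence
\[
\cW_2^2(\delta_x,\nu)=\E_{Y\sim\nu}\|x-Y\|_2^2 .
\]
Here we use the Euclidean ground cost, i.e.\ $A=I$. Apply this with $x=U^\top X_j$ and $\nu=U_\#P_0$, and let $Y=U^\top X_0'$ with $X_0'\sim P_0$ independent of $X_j$. Then
\[
\E_{X_j}\cW_2^2(\delta_{U^\top X_j},U_\#P_0)=\E\|U^\top(X_j-X_0')\|_2^2 .
\]
We assume finite second moments throughout, so that every quantity is finite.

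\textbf{Step 1: bias--variance expansion.} For independent $X_j$ and $X_0'$, the difference $D=X_j-X_0'$ has mean $\mu_j-\mu_0$ and covariance $C_j+C_0$. Since $\E\|U^\top D\|^2=\Tr(U^\top\E[DD^\top]U)$, we get
\[
\E\|U^\top D\|^2=\Tr\!\big(U^\top[(\mu_j-\mu_0)(\mu_j-\mu_0)^\top+C_j+C_0]U\big).
\]
For $j=1$ this equals $\Tr(U^\top[(\mu_1-\mu_0)(\mu_1-\mu_0)^\top+C_1+C_0]U)$. For $j=0$ the mean term vanishes and it equals $\Tr(U^\top 2C_0 U)$.

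\textbf{Step 2: subtract.} Linearity of the trace gives
\[
\Gamma(U)=\Tr(U^\top M U),
\]
with $M=(\mu_1-\mu_0)(\mu_1-\mu_0)^\top+C_1-C_0$, as claimed. This subtraction is the one point that needs care: the self-term contributes $2C_0$, and this is exactly what turns $+C_0$ into $-C_0$. The normalization $U^\top U=I_k$ is not used here. It only matters for the optimization step.

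\textbf{Step 3: optimize.} The matrix $M$ is symmetric but possibly indefinite. Ky Fan's maximum principle states that $\max_{U^\top U=I_k}\Tr(U^\top MU)=\sum_{i\le k}\lambda_i(M)$, and that the maximum is attained exactly when the column span of $U$ is a top-$k$ eigenspace.

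For a self-contained argument, write $M=\sum_i\lambda_i v_iv_i^\top$ with $\lambda_1\ge\dots\ge\lambda_d$. Then
\[
\Tr(U^\top MU)=\sum_i\lambda_i w_i,\qquad w_i=\|U^\top v_i\|^2 .
\]
The weights satisfy $0\le w_i\le1$ and $\sum_i w_i=k$. A linear function maximized over this polytope attains its maximum at $w=(1,\dots,1,0,\dots)$, which gives the stated value. When $\lambda_k>\lambda_{k+1}$ the maximizing subspace is unique; otherwise it is any $k$-dimensional subspace sandwiched between the strict top eigenspace and the eigenspace for eigenvalues $\ge\lambda_k$.

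I expect no real obstacle. The only subtleties are the ones noted above: the factor $2C_0$ from the self-comparison in Step 2, and stating uniqueness only up to eigenvalue ties in Step 3. I would also remark that replacing $C_0$ by $\alpha C_0$ in \eqref{eq:contrastive-matrix-method} corresponds to reweighting the reference term in $\Gamma$.
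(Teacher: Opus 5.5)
Your proposal is correct and follows the paper's proof essentially step for step. You use the point-mass coupling of Lemma~\ref{lem:point-cloud}, expand the second moment of $X_j-X_0'$ into $C_1+C_0+(\mu_1-\mu_0)(\mu_1-\mu_0)^\top$ and $2C_0$, subtract, and apply Ky Fan; beyond the paper, which only cites Ky Fan, you prove it directly via the weights $w_i=\|U^\top v_i\|^2$ and spell out the tie case $\lambda_k=\lambda_{k+1}$.
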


\noindent\textit{Proof sketch.}
For any fixed $x$, $\cW_2^2(\delta_{U^\top x},U_\#P_0)=\E_{Y\sim P_0}\|U^\top(x-Y)\|_2^2$.  Taking expectation over $X_1$ gives a projected second moment with covariance $C_1+C_0$ and mean shift $(\mu_1-\mu_0)(\mu_1-\mu_0)^\top$; taking expectation over $X_0$ gives $2\Tr(U^\top C_0U)$.  Subtracting yields Eq.~\eqref{eq:theory-matrix-method}.  The eigenspace claim follows from the Ky Fan variational principle.  Full proofs and finite-sample perturbation bounds are deferred to Appendix~\ref{app:proofs}.

Theorem~\ref{thm:cpca-method} explains why cPCA is the appropriate lens for GeoReason.  It selects directions that make first-error states far from the correct-state cloud while penalizing directions in which correct reasoning already has high variance.  If trace normalization removes the mean shift, the objective reduces to the usual contrastive covariance $C_1-C_0$; adding the parameter $\alpha$ in Eq.~\eqref{eq:contrastive-matrix-method} gives the generalized background-penalized form $C_1-\alpha C_0$.

\begin{theorem}[First-error localization under a transport margin]
\label{thm:localization-method}
Let $\tau$ be the first error and let $S(t)$ be the ideal transport score in Eq.~\eqref{eq:ot-score-method}.  Suppose that, for all $t<\tau$,
\begin{equation}
\begin{aligned}
  \Prob\{S(t)-\mu_c\geq u\}
  &\leq \exp\{-c\min(u^2/\nu^2,u/b)\}, \\
  &\quad \forall u>0,
\end{aligned}
\label{eq:correct-tail-method}
\end{equation}
for constants $c,\nu,b>0$.  Suppose also that the first error has margin
$\Prob\{S(\tau)\geq\mu_c+\gamma\}\geq1-\beta$ and that the empirical score $\widehat S(t)$ satisfies
\begin{equation}
  \Prob\left\{\max_{t\leq\tau}|\widehat S(t)-S(t)|\leq \gamma/4\right\}\geq1-\alpha .
  \label{eq:estimation-event-method}
\end{equation}
Then the threshold $\theta=\mu_c+\gamma/2$ and first crossing rule $\widehat\tau=\min\{t:\widehat S(t)\geq\theta\}$ obey
\begin{equation}
\begin{aligned}
  \Prob\{\widehat\tau=\tau\} \geq\;
  &1-\alpha-\beta\\
  &-(\tau-1)\exp\!\left[-c\min\!\left(\frac{\gamma^2}{16\nu^2},\frac{\gamma}{4b}\right)\right].
\end{aligned}
\label{eq:first-error-bound-method}
\end{equation}
\end{theorem}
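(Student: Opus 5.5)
We reduce the event $\{\widehat\tau=\tau\}$ to the intersection of three "good" events and bound the failure probability of the complement with a union bound. Define
\[
E_{\mathrm{est}}=\Big\{\max_{t\le\tau}|\widehat S(t)-S(t)|\le\gamma/4\Big\},\qquad
E_{\mathrm{mar}}=\{S(\tau)\ge\mu_c+\gamma\},
\]
\[
E_{\mathrm{pre}}=\Big\{S(t)<\mu_c+\gamma/4\ \text{for all } t<\tau\Big\}.
\]
On $E_{\mathrm{est}}\cap E_{\mathrm{mar}}\cap E_{\mathrm{pre}}$ the first crossing of $\theta=\mu_c+\gamma/2$ occurs exactly at $\tau$, which is the deterministic core of the argument.

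For every $t<\tau$ we have
\[
\widehat S(t)\le S(t)+\gamma/4<\mu_c+\gamma/2=\theta,
\]
so no step before $\tau$ crosses the threshold. At $t=\tau$,
\[
\widehat S(\tau)\ge S(\tau)-\gamma/4\ge\mu_c+3\gamma/4\ge\theta,
\]
so $\tau$ is a crossing. Hence $\widehat\tau=\min\{t:\widehat S(t)\ge\theta\}=\tau$. Note that the estimation event only needs to control indices $t\le\tau$, which matches the hypothesis \eqref{eq:estimation-event-method}. We treat $\tau$ as fixed (or condition on it) throughout. If the estimation event and the score are random jointly with $\tau$, we would state the bound conditionally on $\tau$.

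The probabilistic part is a union bound:
\[
\Prob\{\widehat\tau\ne\tau\}\le\Prob(E_{\mathrm{est}}^c)+\Prob(E_{\mathrm{mar}}^c)+\sum_{t<\tau}\Prob\{S(t)-\mu_c\ge\gamma/4\}.
\]
The first term is at most $\alpha$ by \eqref{eq:estimation-event-method}, and the second is at most $\beta$ by the margin assumption. For each of the $\tau-1$ pre-error steps, we apply the sub-exponential tail \eqref{eq:correct-tail-method} with $u=\gamma/4$, which gives
\[
\exp\!\Big[-c\min\!\Big(\frac{\gamma^2}{16\nu^2},\frac{\gamma}{4b}\Big)\Big],
\]
exactly the exponent in \eqref{eq:first-error-bound-method}. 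No independence across steps is needed, since only the union bound is used.

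The main subtlety is not analytic. It lies in the bookkeeping of thresholds: the pre-error tail must be invoked at level $\gamma/4$ rather than $\gamma/2$, because the estimation slack consumes half of the gap between $\mu_c+\gamma/4$ and $\theta$. I would check carefully that the inequalities close, using a strict inequality before $\tau$ so that no crossing occurs and a non-strict one at $\tau$ so that a crossing does occur. The tail bound is stated with $\ge u$, so the complement event $S(t)<\mu_c+\gamma/4$ is exactly what is needed. A second point to address briefly is the case $\tau=\infty$ (no error), which is excluded here because the margin assumption presupposes that $\tau$ is finite.
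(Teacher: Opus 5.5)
Your proposal is correct and follows the paper's proof essentially step for step. It uses the same estimation and margin events and the same deterministic check that $\widehat S(\tau)\geq\mu_c+3\gamma/4>\theta$, while any crossing before $\tau$ forces $S(t)\geq\mu_c+\gamma/4$, followed by the same union bound with the tail invoked at $u=\gamma/4$. One small wording slip in your closing remark: the estimation slack $\gamma/4$ uses up half of the gap $\gamma/2$ between $\mu_c$ and $\theta$, not half of the gap between $\mu_c+\gamma/4$ and $\theta$; your inequalities themselves are right.
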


\noindent\textit{Proof sketch.}
On the estimation event, the first-error score remains above $\mu_c+3\gamma/4$ whenever the margin event holds, and hence crosses $\theta$.  A false alarm before $\tau$ can occur only if some correct step has $S(t)\geq\mu_c+\gamma/4$.  Applying the sub-exponential tail bound and a union bound over $t<\tau$ gives Eq.~\eqref{eq:first-error-bound-method}.  See Appendix~\ref{app:proofs} for full details.

The result matches the empirical design goal: post-error states need not remain anomalous forever.  Localization only requires the first wrong step to create a transport excursion before any preceding correct transition does.

\begin{proposition}[Distillation preserves first-error decisions when margins survive]
\label{prop:distillation-method}
Let $\teach{s_T(t)}$ and $\stud{s_S(t)}$ be \teach{teacher} and \stud{student} scores on the same trace with common threshold $\theta$, and define the \teach{teacher} decision margin
\begin{equation}
  \teach{m_T}=\min_{1\leq t\leq m}|\teach{s_T(t)}-\theta|.
  \label{eq:teacher-margin}
\end{equation}
If $\max_t|\stud{s_S(t)}-\teach{s_T(t)}|\leq\varepsilon<\teach{m_T}$, then \teach{teacher} and \stud{student} assign identical labels to every step and return the same first-error index.  For random traces,
\begin{equation}
  \Prob\{\stud{\widehat\tau_S}\neq\teach{\widehat\tau_T}\}
  \leq
  \Prob\{\teach{m_T}\leq\varepsilon\}
  +\Prob\{\max_t|\stud{s_S(t)}-\teach{s_T(t)}|>\varepsilon\}.
  \label{eq:distillation-bound}
\end{equation}
\end{proposition}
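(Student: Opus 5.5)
The proof is deterministic and stepwise. The probabilistic bound then follows by a union bound.

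\emph{Deterministic part.} Fix a trace and assume $\max_t|\stud{s_S(t)}-\teach{s_T(t)}|\leq\varepsilon<\teach{m_T}$. Take any step $t$. Then $|\teach{s_T(t)}-\theta|\geq\teach{m_T}>\varepsilon$, so $\teach{s_T(t)}$ lies strictly on one side of $\theta$, at distance more than $\varepsilon$. There are two cases.
\begin{itemize}
\item If $\teach{s_T(t)}\geq\theta+\teach{m_T}$, then $\stud{s_S(t)}\geq\teach{s_T(t)}-\varepsilon>\theta$. The student also labels step $t$ as an error.
\item If $\teach{s_T(t)}\leq\theta-\teach{m_T}$, then $\stud{s_S(t)}\leq\teach{s_T(t)}+\varepsilon<\theta$. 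The student also labels step $t$ as correct.
\end{itemize}
Since $\teach{m_T}>\varepsilon\geq0$, the case $\teach{s_T(t)}=\theta$ cannot occur. So the indicator sets $\{t:\stud{s_S(t)}\geq\theta\}$ and $\{t:\teach{s_T(t)}\geq\theta\}$ coincide. Both first-crossing rules, as in Eq.~\eqref{eq:student-first-error}, take the minimum of this same set, with the convention $\min\emptyset=\infty$. Hence $\stud{\widehat\tau_S}=\teach{\widehat\tau_T}$, including the no-error case.

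\emph{Probabilistic part.} Now let the trace be random. By contraposition of the deterministic part, the event $\{\stud{\widehat\tau_S}\neq\teach{\widehat\tau_T}\}$ is contained in the complement of $\{\teach{m_T}>\varepsilon\}\cap\{\max_t|\stud{s_S(t)}-\teach{s_T(t)}|\leq\varepsilon\}$. That complement is $\{\teach{m_T}\leq\varepsilon\}\cup\{\max_t|\stud{s_S(t)}-\teach{s_T(t)}|>\varepsilon\}$. Applying the union bound gives Eq.~\eqref{eq:distillation-bound}. Here $\varepsilon$ is a fixed deterministic constant, not data-dependent.

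\emph{Main obstacle.} The only delicate point is the boundary convention. The deterministic part needs the strict inequality $\varepsilon<\teach{m_T}$ so that ties at $\theta$ cannot flip a label. The non-strict form $\teach{m_T}\leq\varepsilon$ in the first probability term is exactly what covers the remaining cases. One must also check that the trace length $m$ and the index range are shared by both scores, so that the maximum and minimum are over the same steps. Measurability of the events is routine, because they are finite maxima of measurable scores.
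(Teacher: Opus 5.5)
Your proof is correct and follows the paper's argument. In both, strict margin dominance forces $s_S(t)-\theta$ and $s_T(t)-\theta$ to share a sign at every step, so the threshold sets and first crossings coincide, and the probabilistic bound is the union bound over the complement of that event (your two-case check and the $\min\emptyset=\infty$ convention only make explicit what the paper leaves implicit).
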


Proposition~\ref{prop:distillation-method} identifies the main failure mode of the deployable \stud{student}.  Strong in-domain performance requires only small average distillation error, but robust first-error transfer requires preserving the \teach{teacher}'s \emph{margin} under shifts in model family, prompt distribution, and dataset style.  This is why we report both the label-conditioned \teach{teacher} and the deployable \stud{student}: the \teach{teacher} measures whether a contrastive transport signal exists in hidden space, while the \stud{student} measures how much of that signal can be recovered without inference-time labels.

\begin{figure*}[tb]
  \centering
  \includegraphics[width=\textwidth]{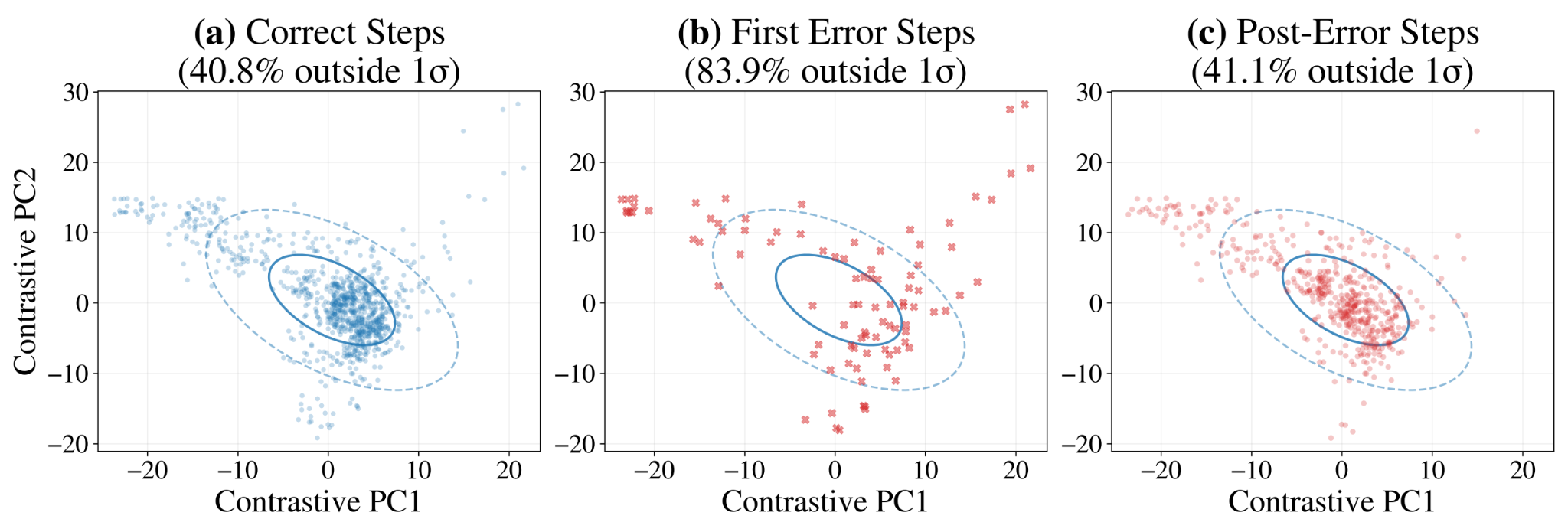}
  \caption{Hidden states projected into cPCA space for (a) correct, (b) first-error, and (c) post-error steps. First-error steps lie largely outside the correct-step distribution (83.9\% outside 1$\sigma$), while post-error steps partially overlap (41.1\%). This supports our view of hallucinations as trajectory deviations from a stable reasoning manifold.}
  \label{fig:trajectory-scatter}
\end{figure*}

\section{Experiments}
\label{sec:experiments}

We evaluate GeoReason on two tasks: step-level hallucination detection and first-error localization. Step-level detection is measured with AUROC, while first-error localization is measured by the accuracy of the first step whose score crosses a validation-selected threshold. The main in-domain results are reported in Table~\ref{tab:baseline_auroc} and Table~\ref{tab:baseline_first_error}.

\paragraph{Benchmarks and preprocessing.}
We use ProcessBench, PRM800K, HaluEval, and TruthfulQA because they cover process-level mathematical reasoning, annotated solution steps, generated hallucinations, and factual truthfulness. Each example is converted to a single ordered trace. When a benchmark provides step boundaries, we preserve them; otherwise, we split generated text at newline and sentence-level delimiters and discard empty fragments. Step labels are mapped to binary correctness. For localization, the first annotated incorrect step is treated as the first-error index and all later steps are evaluated as post-error states, matching the objective in Eq.~\eqref{eq:first-error-index}. Splits are prompt-level and stratified by dataset and error presence, so no reasoning trace appears in more than one split.

\paragraph{Hidden-state extraction and model settings.}
The cross-model experiments use one representative instruction-tuned model from each family: Qwen, Llama, and Mistral. For every generated trace, we run a single forward pass, extract the residual-stream hidden states from the final transformer block before the language-model head, and mean-pool tokens within each step as in Eq.~\eqref{eq:step-pooling}. Unless otherwise stated, the cPCA lens uses rank $k=16$, contrastive penalty $\alpha=1$, post-error weight $\rho=0.25$, and smoothing window $w=3$. Thresholds for first-error localization are tuned only on the validation split and then frozen for test evaluation.

\paragraph{Training details and baselines.}
The \teach{teacher} is a two-layer MLP over the feature block in Eq.~\eqref{eq:feature-block}. The \stud{student} is a two-layer BiLSTM with a step-classification head and a training-only auxiliary head for feature distillation. Both models are trained with AdamW, early stopping on validation AUROC, and prompt-level mini-batches. Baselines are evaluated under the same splits and hidden-state extraction protocol: TL-Entropy and TL-Perplexity use token-level likelihood statistics, Linear Probe trains a linear classifier on pooled step representations, and LLM-Check uses attention-derived scores. The \teach{teacher} should be read as an oracle diagnostic for the existence of a label-conditioned geometric signal; the \stud{student} is the deployable model because it does not use inference-time labels, sampling, or an external verifier.

Figure~\ref{fig:trajectory-scatter} visualizes the central phenomenon behind the method: the first error can produce a localized geometric excursion even when later post-error states move back toward the region occupied by correct steps. This supports the use of transition scores and first-crossing rules rather than a single trace-level endpoint confidence.

Across datasets, both the \teach{teacher} and \stud{student} models achieve strong performance relative to prior baselines. For step-level detection (Table~\ref{tab:baseline_auroc}), the \teach{teacher} attains the highest AUROC on three of the four datasets, including ProcessBench (91.0), HaluEval (94.0), and TruthfulQA (96.0), while the \stud{student} achieves the best performance on PRM800K (99.8). Baseline methods such as TL-Entropy, TL-Perplexity, Linear Probe, and LLM-Check generally perform worse across most datasets.

For first-error detection (Table~\ref{tab:baseline_first_error}), the \teach{teacher} achieves the highest accuracy on ProcessBench (68.7), PRM800K (88.4), and HaluEval (68.7), while the \stud{student} achieves the best performance on TruthfulQA (96.8) and PRM800K (92.9). Baselines again trail behind both proposed models in most settings, although the linear probe performs competitively on certain datasets such as TruthfulQA. Because the tables report point estimates, we interpret small differences cautiously and focus on the repeated pattern across benchmarks and generalization settings.

Overall, these results show that both the \teach{teacher} and \stud{student} models are effective for step-level hallucination detection and first-error localization across a range of benchmarks, consistently outperforming standard entropy-based, probing-based, and attention-based baselines.

\begin{table}[H]
    \caption{Step-level hallucination detection performance (AUROC) across benchmarks. Best results per column are in \textbf{bold}.}
    \label{tab:baseline_auroc}
    \centering
    \small
    \resizebox{\columnwidth}{!}{%
    \begin{tabular}{lcccc}
        \toprule
        \textbf{Methods} & \textbf{ProcessBench} & \textbf{PRM800K} & \textbf{HaluEval} & \textbf{TruthfulQA} \\
        \midrule
        \teach{Teacher (non-deployable)} & \textbf{91.0} & 98.5 & \textbf{94.0} & 96.0 \\
        \stud{Student (deployable)}      & 75.0 & \textbf{99.8} & 88.4 & \textbf{96.5} \\
        TL-Entropy               & 57.1 & 54.5 & 50.8 & 64.4 \\
        TL-Perplexity            & 51.2 & 45.8 & 48.4 & 67.1 \\
        Linear Probe             & 67.8 & 91.3 & 78.6 & 90.0 \\
        LLM-Check (attention)    & 61.9 & 48.0 & 55.7 & 69.8 \\
        \bottomrule
    \end{tabular}%
    }
\end{table}

\begin{table}[H]
    \caption{First-error detection accuracy across benchmarks. Best results per column are in \textbf{bold}.}
    \label{tab:baseline_first_error}
    \centering
    \small
    \resizebox{\columnwidth}{!}{%
    \begin{tabular}{lcccc}
        \toprule
        \textbf{Methods} & \textbf{ProcessBench} & \textbf{PRM800K} & \textbf{HaluEval} & \textbf{TruthfulQA} \\
        \midrule
        \teach{Teacher (non-deployable)} & \textbf{68.7} & 88.4 & 68.7 & 93.2 \\
        \stud{Student (deployable)}      & 34.4 & \textbf{92.9} & \textbf{78.6} & \textbf{96.8} \\
        TL-Entropy               & 46.3 & 43.8 & 42.3 & 68.5 \\
        TL-Perplexity            & 43.2 & 39.0 & 46.4 & 78.2 \\
        Linear Probe             & 24.4 & 68.8 & 68.7 & 89.3 \\
        LLM-Check (attention)    & 43.8 & 49.2 & 49.8 & 50.0 \\
        \bottomrule
    \end{tabular}%
    }
\end{table}

We evaluate cross-model generalization, where models are trained on a single LLM and evaluated on all three LLMs (Qwen, Llama, and Mistral). Tables~\ref{tab:cross_model_teacher} and~\ref{tab:cross_model_student} report results for the \teach{teacher} and \stud{student} models, respectively. We report step-level AUROC and first-error detection accuracy (in parentheses).

The \teach{teacher} demonstrates consistently strong performance across all train--test combinations (Table~\ref{tab:cross_model_teacher}). Step-level AUROC remains stable across models, ranging from 90.1 to 91.7, regardless of the training model. First-error detection accuracy is also consistent, with values between 72.2 and 78.3 across all settings.

The \stud{student} model exhibits strong performance when evaluated on the same model it was trained on (Table~\ref{tab:cross_model_student}). For example, training and testing on Qwen, Llama, and Mistral yields AUROC values of 93.6, 93.5, and 93.9, respectively, with corresponding first-error accuracies of 75.7, 74.4, and 75.3. However, performance varies substantially across different train--test combinations. In particular, when evaluated on models different from the training model, AUROC values range from 33.4 to 58.5, and first-error accuracy ranges from 27.0 to 35.4.

Overall, these results show that both models achieve strong in-domain performance, while cross-model evaluation reveals differences in performance consistency across training and testing configurations.

\begin{table}[H]
    \centering
    \caption{Cross-model generalization for the \teach{\textbf{Teacher}}. Models are trained on a single LLM and evaluated on all LLMs. We report step-level AUROC (top) and first-error accuracy (bottom in parentheses).}
    \label{tab:cross_model_teacher}
    \resizebox{\columnwidth}{!}{%
    \begin{tabular}{lccc}
        \toprule
        \textbf{Train $\downarrow$ / Test $\rightarrow$} & \textbf{Qwen} & \textbf{Llama} & \textbf{Mistral} \\
        \midrule
        Qwen    & 91.6 (\,77.5\,) 
                & 91.7 (\,77.8\,) 
                & 90.9 (\,75.1\,) \\
        Llama   & 90.7 (\,72.2\,) 
                & 91.7 (\,78.2\,) 
                & 91.2 (\,75.9\,) \\
        Mistral & 90.1 (\,73.4\,) 
                & 90.6 (\,74.6\,) 
                & 91.5 (\,78.3\,) \\
        \bottomrule
    \end{tabular}%
    }
\end{table}

\begin{table}[H]
    \centering
    \caption{Cross-model generalization for the \stud{\textbf{Student}}. Models are trained on a single LLM and evaluated on all LLMs. We report step-level AUROC (top) and first-error accuracy (bottom in parentheses).}
    \label{tab:cross_model_student}
    \resizebox{\columnwidth}{!}{%
    \begin{tabular}{lccc}
        \toprule
        \textbf{Train $\downarrow$ / Test $\rightarrow$} & \textbf{Qwen} & \textbf{Llama} & \textbf{Mistral} \\
        \midrule
        Qwen    & 93.6 (\,75.7\,) 
                & 39.1 (\,30.0\,) 
                & 35.4 (\,27.0\,) \\
        Llama   & 51.5 (\,31.9\,) 
                & 93.5 (\,74.4\,) 
                & 33.4 (\,28.4\,) \\
        Mistral & 48.8 (\,32.1\,) 
                & 58.5 (\,35.4\,) 
                & 93.9 (\,75.3\,) \\
        \bottomrule
    \end{tabular}%
    }
\end{table}

We evaluate cross-dataset generalization under a leave-one-out setup, where each method is trained on all datasets except one and evaluated on the held-out dataset. Table~\ref{tab:cross_dataset_results} shows that the \teach{teacher} maintains strong performance across all held-out datasets (AUROC 59.2 to 91.3, first-error 49.8 to 89.0), while the \stud{student} is consistently weaker on PRM800K, HaluEval, and TruthfulQA, with the gap reaching $\Delta=+39.9$ AUROC and $+40.1$ first-error accuracy.

\begin{table}[H]
    \centering
    \caption{Cross-dataset generalization under a leave-one-out setup. Each method is trained on all datasets except the held-out one. We report step-level AUROC and first-error detection accuracy. $\Delta$ denotes the performance gap (\teach{Teacher} $-$ \stud{Student}).}
    \label{tab:cross_dataset_results}
    \resizebox{\columnwidth}{!}{%
    \begin{tabular}{lcccccc}
        \toprule
         & \multicolumn{3}{c}{\textbf{Step-Level AUROC}} & \multicolumn{3}{c}{\textbf{First-Error Acc.}} \\
        \textbf{Held-out Dataset} & \teach{\textbf{Teacher}} & \stud{\textbf{Student}} & $\Delta$ & \teach{\textbf{Teacher}} & \stud{\textbf{Student}} & $\Delta$ \\
        \midrule
        ProcessBench & 59.2 & 62.5 & -3.3 & 54.2 & 38.7 & +15.5 \\
        PRM800K      & 69.0 & 51.3 & +17.7 & 49.8 & 9.7 & +40.1 \\
        HaluEval     & 86.7 & 58.3 & +28.4 & 69.5 & 59.8 & +9.7 \\
        TruthfulQA   & 91.3 & 51.4 & +39.9 & 89.0 & 50.4 & +38.6 \\
        \bottomrule
    \end{tabular}%
    }
\end{table}


\section{Analysis}
The teacher computes geometric features in a label-conditioned, trace-specific cPCA space and outperforms all baselines both in-domain and on out-of-domain models and datasets. Because it requires step labels at inference, we treat it not as a deployable detector but as a diagnostic upper bound that demonstrates the framing of hallucination as trajectory instability. The student, which removes this requirement, also outperforms the baselines in-domain but collapses to near-random AUROC under cross-model and cross-dataset shift.

We attribute this gap to what each model learns. The teacher captures a mechanistic instability signal that is intrinsic to the geometry of correct vs.\ first-error transitions; the student, operating in the full latent space without label-conditioned normalization, learns a representational signal that absorbs model- and dataset-specific encoding quirks. Improving deployment therefore requires distillation that preserves the teacher's transport margin rather than only its mean predictions.

\section{Conclusion}

GeoReason frames step-level hallucination detection as hidden-state trajectory geometry: a label-conditioned teacher exposes the geometric signal of a first error via trace-specific cPCA, and a deployable student distills this signal for single-pass detection from raw hidden states. We prove that cPCA is the optimal lens under a transport-separation objective (Theorem~\ref{thm:cpca-method}), that localization holds whenever a transport margin exists (Theorem~\ref{thm:localization-method}), and that teacher-student agreement reduces to margin preservation (Proposition~\ref{prop:distillation-method}). The teacher transfers across models and datasets while the student does not, identifying margin preservation under shift, rather than detection of the geometric signal, as the central deployment obstacle.

\clearpage
\bibliography{ref}
\bibliographystyle{icml2026}

\newpage
\appendix
\onecolumn

\section{Proofs and Additional Theoretical Analysis}
\label{app:proofs}

This appendix gives the formal details for the theoretical section.  The statements are intentionally assumption-explicit: GeoReason can be guaranteed only when the first reasoning error induces a measurable transport-margin event in hidden-state trajectory space.  If a model makes a semantic error while remaining indistinguishable from correct trajectories in the chosen hidden representation, no unsupervised geometry-only detector can be guaranteed to find it.

\subsection{Notation and transport identities}

For a distribution $P$ on $\R^d$ and a matrix $U\in\R^{d\times k}$, $U_\#P$ denotes the pushforward distribution of $U^\top X$ for $X\sim P$.  For a positive semidefinite matrix $A$, define the squared optimal-transport cost with ground cost $c_A(x,y)=(x-y)^\top A(x-y)$ by
\begin{equation}
  \cW_{2,A}^2(P,Q)=\inf_{\pi\in\Pi(P,Q)}\E_{(X,Y)\sim\pi}(X-Y)^\top A(X-Y).
\end{equation}
When $A=I$, we write $\cW_2^2$.

\begin{lemma}[Point-to-cloud transport]
\label{lem:point-cloud}
Let $Q$ be any distribution on $\R^m$ with finite second moment and mean $\mu_Q$, and let $A\succeq0$.  Then, for any $x\in\R^m$,
\begin{equation}
  \cW_{2,A}^2(\delta_x,Q)
  =\E_{Y\sim Q}(x-Y)^\top A(x-Y)
  =(x-\mu_Q)^\top A(x-\mu_Q)+\Tr(A C_Q),
  \label{eq:point-cloud}
\end{equation}
where $C_Q$ is the covariance of $Q$.
\end{lemma}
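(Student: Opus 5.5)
The plan is to establish the two equalities in Eq.~\eqref{eq:point-cloud} one at a time. For the first, I will show that when one marginal is a Dirac mass the coupling set $\Pi(\delta_x,Q)$ contains exactly one element, namely $\pi=\delta_x\otimes Q$. Once that holds, the infimum defining $\cW_{2,A}^2$ is over a single point and equals $\E_{Y\sim Q}(x-Y)^\top A(x-Y)$.

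\textbf{Uniqueness of the coupling.} Let $\pi\in\Pi(\delta_x,Q)$. Its first marginal is $\delta_x$, so $\pi(\{x\}\times\R^m)=1$, and $\pi$ is therefore supported on $\{x\}\times\R^m$. For any Borel rectangle $B\times C$ this gives
\begin{equation*}
\pi(B\times C)=\mathbf{1}_B(x)\,\pi(\{x\}\times C)=\mathbf{1}_B(x)\,\pi(\R^m\times C)=\delta_x(B)\,Q(C).
\end{equation*}
Rectangles form a $\pi$-system generating the product $\sigma$-algebra, so $\pi=\delta_x\otimes Q$. The cost $c_A\ge0$ is measurable, so the integral $\E_{\pi}c_A(X,Y)$ is well defined. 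Because $Q$ has a finite second moment, the integral is finite, by $(x-y)^\top A(x-y)\le\|A\|_{\op}\|x-y\|_2^2$. This is the only point that needs any care: it is a short measure-theoretic argument, not a real obstacle.

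\textbf{The bias--variance identity.} For the second equality, write $x-Y=(x-\mu_Q)-(Y-\mu_Q)$ and expand the quadratic form into three terms. The first is the deterministic term $(x-\mu_Q)^\top A(x-\mu_Q)$. The second is the cross term $-2(x-\mu_Q)^\top A\,\E[Y-\mu_Q]$, which vanishes because $\E Y=\mu_Q$; linearity of expectation applies since every term is integrable under the second-moment assumption. The third is $\E(Y-\mu_Q)^\top A(Y-\mu_Q)$. To evaluate it, I will use the cyclic trace trick:
\begin{equation*}
\E\,\Tr\big(A(Y-\mu_Q)(Y-\mu_Q)^\top\big)=\Tr\big(A\,\E[(Y-\mu_Q)(Y-\mu_Q)^\top]\big)=\Tr(AC_Q).
\end{equation*}
Adding the three terms gives Eq.~\eqref{eq:point-cloud}. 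Finally, note that positive semidefiniteness of $A$ is used only to guarantee that the cost is nonnegative, so that $\cW_{2,A}^2$ is a genuine transport cost; the algebraic identity itself holds for any symmetric $A$.
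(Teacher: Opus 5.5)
Your proposal is correct and follows the paper's proof: the first equality comes from the unique coupling $\delta_x\otimes Q$, and the second from centering $Y=\mu_Q+\xi$, using the vanishing cross term and $\E\xi^\top A\xi=\Tr(AC_Q)$. Your $\pi$-system argument for uniqueness and your integrability check spell out what the paper states in one line.
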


\begin{proof}
The only coupling between the point mass $\delta_x$ and $Q$ is the law of $(x,Y)$ with $Y\sim Q$.  This gives the first equality.  For the second equality, write $Y=\mu_Q+\xi$ with $\E\xi=0$ and $\E\xi\xi^\top=C_Q$:
\[
\E(x-Y)^\top A(x-Y)=(x-\mu_Q)^\top A(x-\mu_Q)+\E\xi^\top A\xi,
\]
where the cross term vanishes and $\E\xi^\top A\xi=\Tr(A C_Q)$.
\end{proof}

The detector score in the main text is the special case in which $x$ is the augmented transition vector
\[
\phi_t(U)=\big[z_t,\Delta z_t,\Delta^2z_t\big],\qquad
z_t=U^\top\tilde h_t,
\]
with $\Delta z_t=z_t-z_{t-1}$ and $\Delta^2z_t=z_t-2z_{t-1}+z_{t-2}$.  Lemma~\ref{lem:point-cloud} shows that this score is a quadratic deviation from the correct-transition cloud.  If $A$ is block diagonal, the score is a weighted sum of position, velocity, and acceleration deviations.  If $A$ has off-diagonal blocks, the score also includes directional-persistence terms such as $\langle \Delta z_t,\Delta z_{t-1}\rangle$ after expanding the quadratic form.  Thus the hand-designed features used by GeoReason are a low-order coordinate system for a learned transport cost.

\subsection{Proof of the contrastive transport theorem}

\begin{theorem}[Contrastive transport projection]
\label{thm:app-ctpca}
Let $X_0\sim P_0$ and $X_1\sim P_1$ be hidden vectors in $\R^d$ with means $\mu_0,\mu_1$ and covariances $C_0,C_1$.  For $U\in\R^{d\times k}$ with $U^\top U=I_k$, define
\begin{equation}
  \Gamma(U)=\E_{X_1}\cW_2^2(\delta_{U^\top X_1},U_\#P_0)
     -\E_{X_0}\cW_2^2(\delta_{U^\top X_0},U_\#P_0).
\end{equation}
Then
\begin{equation}
  \Gamma(U)=\Tr\!\big[U^\top M U\big],
  \qquad
  M=(\mu_1-\mu_0)(\mu_1-\mu_0)^\top+C_1-C_0.
\end{equation}
The maximizers of $\Gamma(U)$ are the top-$k$ eigenspaces of $M$, and the maximum value is $\sum_{i=1}^k\lambda_i(M)$, where $\lambda_1(M)\geq\cdots\geq\lambda_d(M)$.
\end{theorem}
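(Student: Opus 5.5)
The plan is to reduce both transport terms in $\Gamma(U)$ to quadratic forms in $U$ using Lemma~\ref{lem:point-cloud}, and then maximize the resulting trace with the Ky Fan principle.

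\textbf{Step 1: each transport term in closed form.} Apply Lemma~\ref{lem:point-cloud} with $A=I_k$ and $Q=U_\#P_0$. The pushforward has mean $U^\top\mu_0$ and covariance $U^\top C_0U$, so for fixed $x\in\R^d$
\[
\cW_2^2(\delta_{U^\top x},U_\#P_0)=\|U^\top(x-\mu_0)\|_2^2+\Tr(U^\top C_0U).
\]
We need finite second moments of $P_0$ and $P_1$. This is implicit in the covariances being defined, and I would state it explicitly. It guarantees every expectation below is finite, so the subtraction in $\Gamma$ is legitimate.

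\textbf{Step 2: take expectations.} First average over $X_1\sim P_1$. Write $X_1-\mu_0=(X_1-\mu_1)+(\mu_1-\mu_0)$; the cross term has zero mean. This gives
\[
\E\|U^\top(X_1-\mu_0)\|^2=\Tr\!\big(U^\top[C_1+(\mu_1-\mu_0)(\mu_1-\mu_0)^\top]U\big).
\]
So the first term of $\Gamma$ equals $\Tr(U^\top[C_1+C_0+(\mu_1-\mu_0)(\mu_1-\mu_0)^\top]U)$. Averaging over $X_0\sim P_0$ instead gives $2\Tr(U^\top C_0U)$. Subtracting the second from the first yields $\Gamma(U)=\Tr(U^\top MU)$ with $M$ as stated.

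\textbf{Step 3: optimization.} $M$ is symmetric but possibly indefinite, since $C_1-C_0$ need not be PSD. Ky Fan's maximum principle still applies to any real symmetric matrix: $\max_{U^\top U=I_k}\Tr(U^\top MU)=\sum_{i\le k}\lambda_i(M)$, attained when $\operatorname{range}(U)$ is spanned by top-$k$ eigenvectors.

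\textbf{Main obstacle: the characterization of maximizers.} If I needed to verify this rather than cite it, I would write $U^\top MU$ in the eigenbasis of $M$. The objective becomes $\sum_i\lambda_i w_i$ with $w_i=\|U^\top e_i\|^2\in[0,1]$ and $\sum_i w_i=k$. A linear program over this capped simplex shows the maximum is the top-$k$ sum. Equality forces $w_i=1$ for every $i$ with $\lambda_i>\lambda_k$, and $w_i=0$ for every $i$ with $\lambda_i<\lambda_k$. Hence $\operatorname{range}(U)$ is a top-$k$ eigenspace; this is unique when $\lambda_k>\lambda_{k+1}$, and any such choice works when there is a tie, which justifies the plural "eigenspaces." Finally, $\Gamma$ depends on $U$ only through $UU^\top$, so the maximizer is determined up to right multiplication by an orthogonal $k\times k$ matrix.
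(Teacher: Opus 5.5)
Your proposal is correct and follows essentially the same route as the paper: you use Lemma~\ref{lem:point-cloud} to reduce the two transport terms to $\Tr(U^\top[C_1+C_0+(\mu_1-\mu_0)(\mu_1-\mu_0)^\top]U)$ and $2\Tr(U^\top C_0U)$, subtract, and invoke Ky Fan. The differences are cosmetic. You use the mean-plus-covariance form of the lemma rather than the paper's independent-copy expansion of $\E\|U^\top(X_1-Y_0)\|_2^2$. You also verify the maximizer characterization, including ties and indefinite $M$, with the capped-simplex argument, where the paper simply cites Ky Fan.
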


\begin{proof}
Let $Y_0\sim P_0$ be independent of $X_0,X_1$.  By Lemma~\ref{lem:point-cloud},
\begin{align*}
\E_{X_1}\cW_2^2(\delta_{U^\top X_1},U_\#P_0)
&=\E\|U^\top(X_1-Y_0)\|_2^2 \\
&=\E\Tr\big[U^\top(X_1-Y_0)(X_1-Y_0)^\top U\big] \\
&=\Tr\big[U^\top\{C_1+C_0+(\mu_1-\mu_0)(\mu_1-\mu_0)^\top\}U\big].
\end{align*}
Similarly, for an independent copy $Y_0'$ of $X_0$,
\begin{align*}
\E_{X_0}\cW_2^2(\delta_{U^\top X_0},U_\#P_0)
&=\E\|U^\top(X_0-Y_0')\|_2^2
=2\Tr(U^\top C_0U).
\end{align*}
Subtracting gives $\Gamma(U)=\Tr(U^\top M U)$.  Maximization over orthonormal $U$ is exactly the Ky Fan variational problem, whose solutions are the top-$k$ eigenspaces of $M$.
\end{proof}

\paragraph{Connection to cPCA.}
If trace-normalization removes the first-order mean shift, then $\mu_1\approx\mu_0$ and $M\approx C_1-C_0$, the standard contrastive covariance matrix.  A background penalty $\alpha C_0$ corresponds to optimizing
\[
\Gamma_\alpha(U)=\Tr\{U^\top[(\mu_1-\mu_0)(\mu_1-\mu_0)^\top+C_1-\alpha C_0]U\},
\]
which favors directions where first-error variance or displacement is large relative to correct-step variance.  This is the theoretical objective behind the label-conditioned teacher projection.

\subsection{Finite-sample stability of the contrastive projection}

The teacher uses empirical means and covariances.  The following proposition records the stability needed for the main theorem to remain meaningful with finite traces.

\begin{proposition}[Finite-sample gap preservation]
\label{prop:finite-sample}
Let $\widehat M$ be the empirical version of $M$ and suppose $\|\widehat M-M\|_{\op}\leq\varepsilon_M$.  Let $U_\star$ be a top-$k$ eigenspace of $M$ and $\widehat U$ a top-$k$ eigenspace of $\widehat M$.  Then
\begin{equation}
  \Gamma(\widehat U)
  \geq \Gamma(U_\star)-2k\varepsilon_M.
  \label{eq:gap-preservation}
\end{equation}
If additionally $\lambda_k(M)-\lambda_{k+1}(M)=\Delta>2\varepsilon_M$, then the subspace error obeys
\begin{equation}
  \|\sin\Theta(\widehat U,U_\star)\|_{\op}
  \leq \frac{2\varepsilon_M}{\Delta}.
  \label{eq:davis-kahan}
\end{equation}
\end{proposition}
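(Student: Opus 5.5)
The plan is to combine the identity $\Gamma(U)=\Tr(U^\top M U)$ from Theorem~\ref{thm:app-ctpca} with two standard perturbation tools: the optimality of $\widehat U$ for $\widehat M$, and the Davis--Kahan $\sin\Theta$ theorem.

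\emph{Gap preservation.} Define $\widehat\Gamma(U)=\Tr(U^\top\widehat M U)$. For any orthonormal $U\in\R^{d\times k}$,
\[
|\Gamma(U)-\widehat\Gamma(U)|=|\Tr(U^\top(M-\widehat M)U)|=\Bigl|\sum_{i=1}^k u_i^\top(M-\widehat M)u_i\Bigr|\leq k\varepsilon_M,
\]
because each Rayleigh quotient of the symmetric matrix $M-\widehat M$ is bounded by its operator norm. By Ky Fan, $\widehat U$ maximizes $\widehat\Gamma$, so $\widehat\Gamma(\widehat U)\geq\widehat\Gamma(U_\star)$. Chaining the three inequalities gives
\[
\Gamma(\widehat U)\geq\widehat\Gamma(\widehat U)-k\varepsilon_M\geq\widehat\Gamma(U_\star)-k\varepsilon_M\geq\Gamma(U_\star)-2k\varepsilon_M,
\]
which is Eq.~\eqref{eq:gap-preservation}. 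No eigengap is needed for this part. The result also does not depend on which top-$k$ eigenspace is chosen when eigenvalues are tied, since every choice attains the same maximum of $\widehat\Gamma$.

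\emph{Subspace error.} I would invoke the Davis--Kahan $\sin\Theta$ theorem in the form that places the separation condition on one unperturbed and one perturbed spectrum. Weyl's inequality gives $\lambda_k(\widehat M)\geq\lambda_k(M)-\varepsilon_M$. The spectrum of $M$ on the complement of $U_\star$ lies in $(-\infty,\lambda_{k+1}(M)]$. The separation between the eigenvalues of $\widehat M$ associated with $\widehat U$ and the remaining eigenvalues of $M$ is therefore at least
\[
\delta:=\lambda_k(M)-\lambda_{k+1}(M)-\varepsilon_M=\Delta-\varepsilon_M.
\]
The hypothesis $\Delta>2\varepsilon_M$ gives $\delta>\Delta/2>0$. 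Davis--Kahan then yields
\[
\|\sin\Theta(\widehat U,U_\star)\|_{\op}\leq\frac{\|(\widehat M-M)U_\star\|_{\op}}{\delta}\leq\frac{\varepsilon_M}{\Delta-\varepsilon_M}<\frac{2\varepsilon_M}{\Delta},
\]
which is Eq.~\eqref{eq:davis-kahan}. If the version of the theorem is used with the separation measured between $\lambda_{k+1}(\widehat M)$ and $\lambda_k(M)$, the same Weyl step applies symmetrically.

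The main obstacle is bookkeeping in the Davis--Kahan step: choosing a version whose separation hypothesis matches $\Delta>2\varepsilon_M$, and handling the fact that $M$ need not be positive semidefinite. Neither Weyl's inequality nor Davis--Kahan needs positive semidefiniteness; they only require $M$ to be symmetric, so I expect this to go through.
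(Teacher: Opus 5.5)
Your proof is correct and follows the paper's argument. The first claim uses the identical chain $\Gamma(\widehat U)\geq\Tr(\widehat U^\top\widehat M\widehat U)-k\varepsilon_M\geq\Tr(U_\star^\top\widehat M U_\star)-k\varepsilon_M\geq\Gamma(U_\star)-2k\varepsilon_M$, and the second uses Davis--Kahan, where you usefully make explicit the Weyl step giving separation $\Delta-\varepsilon_M$ and the bound $\varepsilon_M/(\Delta-\varepsilon_M)\leq 2\varepsilon_M/\Delta$ that the paper leaves implicit.

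One small bookkeeping point: when the separation is taken between the top-$k$ eigenvalues of $\widehat M$ and the trailing eigenvalues of $M$, the Davis--Kahan residual is $(M-\widehat M)\widehat U$, whereas $(\widehat M-M)U_\star$ pairs with the gap $\lambda_k(M)-\lambda_{k+1}(\widehat M)$. Both residuals have norm at most $\varepsilon_M$ and both gaps are at least $\Delta-\varepsilon_M$, so nothing changes, though your final strict inequality should be $\leq$ to cover $\varepsilon_M=0$.
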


\begin{proof}
Since $\widehat U$ maximizes $\Tr(U^\top\widehat M U)$ over orthonormal $U$,
\[
\Tr(\widehat U^\top\widehat M\widehat U)\geq
\Tr(U_\star^\top\widehat M U_\star).
\]
Using $|\Tr(U^\top(\widehat M-M)U)|\leq k\varepsilon_M$ for any orthonormal $U$ gives
\[
\Tr(\widehat U^\top M\widehat U)
\geq \Tr(\widehat U^\top\widehat M\widehat U)-k\varepsilon_M
\geq \Tr(U_\star^\top\widehat M U_\star)-k\varepsilon_M
\geq \Tr(U_\star^\top M U_\star)-2k\varepsilon_M.
\]
This is Eq.~\eqref{eq:gap-preservation}.  Eq.~\eqref{eq:davis-kahan} follows from the Davis-Kahan sin-theta theorem applied to $M$ and $\widehat M$.
\end{proof}

\paragraph{A typical concentration rate.}
If the hidden vectors in both classes are sub-Gaussian with parameter $\kappa$ and sample sizes $n_0,n_1$, then standard covariance concentration yields, with probability at least $1-\delta$,
\begin{equation}
\label{eq:sample-rate}
  \varepsilon_M
  \lesssim
  \kappa^2\sqrt{\frac{d+\log(1/\delta)}{n_0\wedge n_1}}
  +\kappa\|\mu_1-\mu_0\|_2\sqrt{\frac{d+\log(1/\delta)}{n_0\wedge n_1}},
\end{equation}
up to universal constants and lower-order terms.  The first term is covariance estimation; the second arises from the rank-one mean-shift component.  In practice the effective dimension is the layer-wise intrinsic rank after trace normalization, which is often much smaller than the raw hidden width.

\subsection{Proof of first-error localization}

\begin{theorem}[Localization under a transport margin]
\label{thm:app-localization}
Let $\tau$ be the first error.  Suppose that, for $t<\tau$,
\begin{equation}
  \Prob\{S(t)-\mu_c\geq u\}
  \leq \exp\{-c\min(u^2/\nu^2,u/b)\}
  \quad\text{for all }u>0,
  \label{eq:correct-tail}
\end{equation}
and that
\begin{equation}
  \Prob\{S(\tau)\geq \mu_c+\gamma\}\geq1-\beta.
  \label{eq:margin-event}
\end{equation}
Assume the empirical score satisfies
\begin{equation}
  \Prob\left\{\max_{t\leq\tau}|\widehat S(t)-S(t)|\leq\gamma/4\right\}
  \geq1-\alpha.
  \label{eq:estimation-event}
\end{equation}
Set $\theta=\mu_c+\gamma/2$ and $\widehat\tau=\min\{t:\widehat S(t)\geq\theta\}$.  Then
\begin{equation}
  \Prob\{\widehat\tau=\tau\}
  \geq 1-\alpha-\beta-(\tau-1)
  \exp\!\left[-c\min\!\left(\frac{\gamma^2}{16\nu^2},\frac{\gamma}{4b}\right)\right].
\end{equation}
\end{theorem}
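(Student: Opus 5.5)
The plan is to intersect three good events and show that on their intersection the first-crossing rule returns exactly $\tau$; the probability bound then follows from a union bound over the complements. The three events are:
\begin{itemize}
\item the estimation event $E_{\mathrm{est}}=\{\max_{t\leq\tau}|\widehat S(t)-S(t)|\leq\gamma/4\}$, with $\Prob(E_{\mathrm{est}}^c)\leq\alpha$ by Eq.~\eqref{eq:estimation-event};
\item the margin event $E_{\mathrm{mar}}=\{S(\tau)\geq\mu_c+\gamma\}$, with $\Prob(E_{\mathrm{mar}}^c)\leq\beta$ by Eq.~\eqref{eq:margin-event};
\item the no-false-alarm event $E_{\mathrm{nf}}=\bigcap_{t<\tau}\{S(t)<\mu_c+\gamma/4\}$.
\end{itemize}
No independence between the events is needed, since only the union bound $\Prob(E_{\mathrm{est}}\cap E_{\mathrm{mar}}\cap E_{\mathrm{nf}})\geq 1-\Prob(E_{\mathrm{est}}^c)-\Prob(E_{\mathrm{mar}}^c)-\Prob(E_{\mathrm{nf}}^c)$ is used.

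The first step is the deterministic claim: on $E_{\mathrm{est}}\cap E_{\mathrm{mar}}\cap E_{\mathrm{nf}}$ we have $\widehat\tau=\tau$. Two facts establish this.
\begin{itemize}
\item \emph{The error step crosses.} $\widehat S(\tau)\geq S(\tau)-\gamma/4\geq\mu_c+3\gamma/4>\theta=\mu_c+\gamma/2$, so $\widehat\tau\leq\tau$.
\item \emph{No earlier step crosses.} For each $t<\tau$, $\widehat S(t)\leq S(t)+\gamma/4<\mu_c+\gamma/2=\theta$, so no step before $\tau$ crosses the threshold.
\end{itemize}
Together these give $\widehat\tau=\tau$. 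The choice $\theta=\mu_c+\gamma/2$ with estimation slack $\gamma/4$ is what leaves a buffer of $\gamma/4$ on each side.

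The second step bounds $\Prob(E_{\mathrm{nf}}^c)$. By a union bound over $t=1,\ldots,\tau-1$ and the tail hypothesis Eq.~\eqref{eq:correct-tail} with $u=\gamma/4$, each term is at most $\exp\{-c\min(\gamma^2/(16\nu^2),\gamma/(4b))\}$. This yields the factor $(\tau-1)$ times that exponential, and combining the three bounds gives the stated inequality.

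The main subtlety is not computational but interpretive. The tail bound controls $\Prob\{S(t)-\mu_c\geq u\}$ only for $t<\tau$, and $\tau$ must be treated as fixed (deterministic, or the statement read conditionally on $\tau$) so that the union bound has a fixed number $\tau-1$ of terms. I would state explicitly that the argument is conditional on $\tau$. I would also note that the inequality at $u=\gamma/4$ is non-strict, whereas the no-false-alarm step needs the strict inequality $S(t)<\mu_c+\gamma/4$; this is harmless, because the tail bound already covers $\{S(t)-\mu_c\geq\gamma/4\}$, which is exactly the complement we need.
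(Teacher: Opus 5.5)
Your proof is correct and matches the paper's argument: the paper also shows that on the estimation and margin events the error step satisfies $\widehat S(\tau)\geq\mu_c+3\gamma/4>\theta$, and that any false alarm before $\tau$ forces $S(t)\geq\mu_c+\gamma/4$ for some $t<\tau$, which is controlled by the tail bound at $u=\gamma/4$ and a union bound. Your explicit event $E_{\mathrm{nf}}$ and your remark that $\tau$ must be treated as fixed (or conditioned on) are simply a more careful packaging of the same proof.
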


\begin{proof}
Let $E_{\mathrm{est}}$ be the event in Eq.~\eqref{eq:estimation-event}, and let $E_{\mathrm{err}}$ be the event in Eq.~\eqref{eq:margin-event}.  On $E_{\mathrm{est}}\cap E_{\mathrm{err}}$,
\[
\widehat S(\tau)\geq S(\tau)-\gamma/4\geq \mu_c+3\gamma/4>\theta,
\]
so the first error is detected.  A false alarm before $\tau$ can occur on $E_{\mathrm{est}}$ only if, for some $t<\tau$,
\[
S(t)\geq \widehat S(t)-\gamma/4\geq \theta-\gamma/4=\mu_c+\gamma/4.
\]
By the union bound and Eq.~\eqref{eq:correct-tail},
\[
\Prob\{\exists t<\tau:S(t)\geq\mu_c+\gamma/4\}
\leq (\tau-1)\exp\left[-c\min\left(\frac{\gamma^2}{16\nu^2},\frac{\gamma}{4b}\right)\right].
\]
Combining with the failure probabilities of $E_{\mathrm{est}}$ and $E_{\mathrm{err}}$ gives the result.
\end{proof}

\paragraph{Interpretation.}
The theorem does not require all post-error steps to remain anomalous.  This matters for the empirical phenomenon in which the trajectory may jump at the first error and then return toward the correct region.  The proof only needs the first error to cross the transport threshold before any earlier correct step does.

\subsection{Distillation transfer}

\begin{proposition}[Teacher-student decision preservation]
\label{prop:app-distill}
Let $S_T(t)$ and $S_S(t)$ be teacher and student scores on a trace, with a common threshold $\theta$.  Let
\[
  m_T=\min_{1\leq t\leq T}|S_T(t)-\theta|.
\]
If $\max_t|S_S(t)-S_T(t)|\leq\varepsilon<m_T$, then the teacher and student assign identical binary labels to all steps and therefore return the same first-error index.  For random traces,
\begin{equation}
  \Prob\{\widehat\tau_S\neq\widehat\tau_T\}
  \leq \Prob\{m_T\leq\varepsilon\}
   +\Prob\{\max_t|S_S(t)-S_T(t)|>\varepsilon\}.
\end{equation}
\end{proposition}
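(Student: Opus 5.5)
The argument has two parts: a deterministic step-by-step comparison, followed by a probabilistic bound obtained by taking complements. Fix a trace and suppose $\max_t|S_S(t)-S_T(t)|\leq\varepsilon<m_T$.

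\emph{Deterministic part.} For each step $t$, the definition of $m_T$ gives $|S_T(t)-\theta|\geq m_T>\varepsilon$. We split into two cases.
\begin{itemize}
\item If $S_T(t)\geq\theta$, then in fact $S_T(t)\geq\theta+m_T$, since equality with $\theta$ would force $m_T=0\leq\varepsilon$, contradicting $\varepsilon<m_T$. Hence
\[
S_S(t)\geq S_T(t)-\varepsilon\geq \theta+m_T-\varepsilon>\theta .
\]
\item If $S_T(t)<\theta$, then $S_T(t)\leq\theta-m_T$, so
\[
S_S(t)\leq \theta-m_T+\varepsilon<\theta .
\]
\end{itemize}
So the indicators $\mathbf 1\{S_S(t)\geq\theta\}$ and $\mathbf 1\{S_T(t)\geq\theta\}$ coincide for every $t$. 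The two first-crossing sets $\{t: S(t)\geq\theta\}$ are therefore identical. Their minima agree, and if both sets are empty both rules return $\infty$. Hence $\widehat\tau_S=\widehat\tau_T$.

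\emph{Probabilistic part.} For random traces, the deterministic part shows the event inclusion
\[
\{\widehat\tau_S\neq\widehat\tau_T\}\subseteq\{m_T\leq\varepsilon\}\cup\{\max_t|S_S(t)-S_T(t)|>\varepsilon\}.
\]
This is just the contrapositive: on the complement of the right-hand side, both $\max_t|S_S-S_T|\leq\varepsilon$ and $\varepsilon<m_T$ hold, so the decisions agree. A union bound then gives the stated inequality. The only measurability requirement is that $m_T$ and the maximum deviation be random variables, which holds because each is a finite min or max of measurable scores.

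The one point needing care is the boundary case $S_T(t)=\theta$, which the strict inequality $\varepsilon<m_T$ rules out. The strictness of the threshold rule ($\geq\theta$) must also be matched consistently in both cases, and the argument above does so. Apart from this the proof is routine.
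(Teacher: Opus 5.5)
Your proposal is correct and takes essentially the same route as the paper. The paper argues per step that $|S_S(t)-S_T(t)|<|S_T(t)-\theta|$ forces $S_S(t)-\theta$ and $S_T(t)-\theta$ to have the same sign, then obtains the probabilistic bound as the complement of this deterministic event; your explicit handling of the boundary case $S_T(t)=\theta$ and of the non-strict rule $S(t)\geq\theta$ spells out what the paper's one-line ``same sign'' remark leaves implicit.
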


\begin{proof}
If $|S_S(t)-S_T(t)|<|S_T(t)-\theta|$ for every $t$, then $S_S(t)-\theta$ and $S_T(t)-\theta$ have the same sign for every $t$.  Hence every step label is identical, and the first threshold crossing is identical.  The probabilistic statement is the complement of this deterministic event.
\end{proof}

This proposition clarifies why a student can match the teacher in-domain yet fail under dataset or model shift.  The teacher is built from the contrastive transport matrix, so it depends on a low-dimensional instability direction.  The student observes full hidden states.  If nuisance directions vary across datasets or LLM families, the approximation error $\max_t|S_S(t)-S_T(t)|$ can increase.  Alternatively, if the teacher scores concentrate near threshold on the shifted domain, $m_T$ shrinks.  Either mechanism breaks decision preservation even when average regression loss appears acceptable.

\subsection{Relation to the seven GeoReason features}

The theoretical score in Eq.~\eqref{eq:point-cloud} is quadratic in the augmented transition variables.  Expanding with a block matrix
\[
A=\begin{bmatrix}
A_{00}&A_{01}&A_{02}\\
A_{10}&A_{11}&A_{12}\\
A_{20}&A_{21}&A_{22}
\end{bmatrix}
\]
gives terms of the form
\begin{align*}
&z_t^\top A_{00}z_t,\quad
\Delta z_t^\top A_{11}\Delta z_t,
\quad
\Delta^2z_t{}^\top A_{22}\Delta^2z_t,\\
&z_t^\top A_{01}\Delta z_t,
\quad
\Delta z_t^\top A_{12}\Delta^2z_t,
\quad
z_t^\top A_{02}\Delta^2z_t,
\end{align*}
plus linear and constant terms from the correct-transition mean.  The scalar features used by GeoReason can be interpreted as computationally cheap summaries of these quantities:
\begin{itemize}
  \item projected state $z_t$ and magnitude $\|z_t\|$ estimate position in the contrastive transport lens;
  \item normalized magnitude estimates point-to-cloud distance after trace-wise scale correction;
  \item velocity $\|\Delta z_t\|$ and acceleration $\|\Delta^2z_t\|$ estimate local transition cost;
  \item rolling energy estimates a local average of $S(t)$, reducing false positives from isolated noise;
  \item directional persistence estimates cross terms between successive increments, distinguishing coherent progress from erratic jumps.
\end{itemize}
The MLP is therefore not required to invent a new geometric statistic from scratch; it learns a nonlinear calibration of a transport-motivated sufficient feature family.

\subsection{Scope of the guarantee}

The results above rely on three substantive assumptions.  First, the hidden layer must encode reasoning correctness through a contrastive transport direction; if $P_0$ and $P_1$ are identical after projection, no geometry-based detector can separate them.  Second, the first error must have a margin $\gamma$ larger than normal correct-step fluctuations; very subtle errors may be detectable only with additional semantic supervision.  Third, deployable performance requires the student to preserve the teacher margin under distribution shift.  These assumptions match the empirical structure of GeoReason: the teacher is a diagnostic upper bound for the hidden-space signal, while the student measures how much of that signal can be recovered without inference-time labels.

\subsection{Computational cost and implementation details}
\label{subsec:complexity}

For $N=\sum_i m_i$ total steps, hidden dimension $d$, and cPCA rank $k$, naive covariance construction costs $O(Nd^2)$.  Our implementation uses a matrix-free randomized eigensolver for Eq.~\eqref{eq:contrastive-matrix-method}, requiring $O(Ndk)$ time and $O(dk)$ working memory after streaming the normalized hidden states.  Feature extraction costs $O(Nk)$ and \teach{teacher} training is negligible relative to hidden-state extraction.  \stud{Student} inference costs one LLM forward pass to obtain hidden states plus $O(mH^2)$ for a BiLSTM with hidden width $H$ on a trace of length $m$.  No step requires sampling multiple completions or querying an external verifier, which distinguishes GeoReason from self-consistency and process-supervision pipelines.


\newpage

\end{document}